\documentclass{article}

\usepackage[accepted,nohyperref]{eiml_icml2026}
\usepackage{amsmath,amssymb,amsthm}
\usepackage{booktabs}
\usepackage{graphicx}
\usepackage{multirow}
\usepackage{microtype}
\usepackage{xurl}
\usepackage{enumitem}
\usepackage{placeins}
\usepackage{xcolor}
\usepackage{fancyvrb}
\graphicspath{{./}{figures/}}

\newtheorem{proposition}{Proposition}

\newcommand{\E}{\mathbb{E}}

\newcommand{\Cov}{\mathrm{Cov}}
\newcommand{\diag}{\mathrm{Diag}}
\newcommand{\softmax}{\mathrm{softmax}}
\icmltitlerunning{Same Target, Different Basins: Hard vs. Soft Labels for Annotator Distributions}
\begin{document}

\twocolumn[
\icmltitle{Same Target, Different Basins: \\Hard vs. Soft Labels for Annotator Distributions}

\begin{icmlauthorlist}
\icmlauthor{Mirerfan Gheibi}{ind}
\icmlauthor{Gashin Ghazizadeh}{ind}
\end{icmlauthorlist}

\icmlaffiliation{ind}{Independent Researcher}

\icmlcorrespondingauthor{Mirerfan Gheibi}{m.erfan.gh@gmail.com}
\icmlcorrespondingauthor{Gashin Ghazizadeh}{g.gashin@gmail.com}

\icmlkeywords{epistemic alignment, annotator disagreement, soft labels, hard-label delivery, stochastic label sampling, loss landscape}

\vskip 0.3in
]

\printAffiliationsAndNotice{}

\begin{abstract}
When annotators disagree, that disagreement can reflect epistemic uncertainty rather than simple label noise. We study hard-label delivery as an alternative to the usual choices of collapsing votes to a single label or training directly on the empirical soft-label distribution. We focus on two primary hard-label methods: multipass, which cycles through observed votes while keeping the dataset size fixed, and stochastic label sampling (SLS), which samples one label per example at the start of each epoch. On CIFAR-10H, we find that when only a small number of annotations per example is available, hard-label delivery improves over soft-label training, with larger improvements where the sparse empirical target is farther from the full annotator distribution. When full annotator distributions are available, both hard-label methods match soft-label training. We use deterministic control as an ablation of multipass and shuffled SLS as a control that breaks the example-to-distribution match. We also show that SLS and soft-label cross-entropy optimize the same expected objective. Hard-label delivery also converges to flatter basins, with supporting descriptive evidence from OOD detection on SVHN and CIFAR-100. Overall, these results suggest that multipass is a strong practical default when raw vote counts are available, while SLS offers a lightweight alternative that remains competitive when only a few votes per example are available and matches soft-label training when full annotator distributions are available.
\end{abstract}

\section{Introduction}
In supervised learning with multiple annotations per example, a common simplification is to collapse those annotations to a single label. That convention is convenient, but it can discard useful information when annotators disagree. Prior work has argued that disagreement should not always be treated as mere annotation error or collapsed away, especially when it reflects ambiguity, subjectivity, or multiple plausible interpretations \citep{peterson2019human,uma2021learning,plank2022problem}. In this paper, we ask a narrower question: if the goal is to learn from the annotator distribution, does that distribution need to remain inside the loss at every step, or can it instead be delivered through hard labels during training?

We study a family of hard-label delivery methods that keep the per-example annotator target fixed while changing only how it is presented to the optimizer. Our primary method is multipass, which cycles through the observed votes for each example while keeping the dataset at $N$ examples. We also study stochastic label sampling (SLS), which draws one label per example at the start of each epoch and then trains with ordinary hard-label cross-entropy. To understand what is driving any observed differences, we include two controls. One repeats the multipass procedure with a different fixed vote order, testing whether traversal order matters. The other shuffles annotator distributions across examples before SLS, preserving overall disagreement statistics while breaking the link between each example and its own distribution.

Prior work already shows the value of learning from annotator distributions \citep{peterson2019human}. Our goal is different: we hold the target distribution fixed and ask what changes when only the delivery format changes. SLS is especially useful analytically because Proposition~\ref{prop:main} shows that SLS and soft-label cross-entropy optimize the same expected objective. This makes it possible to separate differences in optimization path from differences in target. The broader family view also matters in practice: when raw vote counts are available, multipass is the hard-label method of primary interest, while SLS plays the same role when only per-example probabilities are available.

The experiments show a clear regime split. When only a few annotations per example are available, hard-label delivery is directionally better than soft-label training on soft NLL, with larger gains where the sparse empirical target is farther from the full annotator distribution. When full annotator distributions are available, multipass and SLS match soft-label training on the primary endpoint metrics. The two controls clarify why: changing traversal order does not materially change the result, but breaking the pairing between examples and their own annotator distributions does. Hard-label delivery also converges to flatter basins, with supporting descriptive gains in OOD uncertainty analyses.

Our contributions are:
\begin{itemize}[leftmargin=1.2em]
  \item \textbf{A controlled comparison of delivery format.} We compare hard-label delivery with soft-label training while keeping the annotator target fixed. On CIFAR-10H, hard-label delivery is directionally better than soft-label training across the sparse annotator-count sweep and matches soft-label training in the full-distribution regime. The controls show that preserving the example-to-distribution match is essential.
  \item \textbf{Expected-objective equivalence for SLS.} Proposition~\ref{prop:main} shows that SLS and soft-label cross-entropy optimize the same expected objective, and gives the corresponding closed-form gradient covariance under label sampling.
\end{itemize}

\section{Related Work}
\label{sec:related}

\paragraph{Learning from annotator disagreement.}
Prior work treats annotator disagreement as information rather than noise to be collapsed away. Classical crowd models estimate latent labels together with annotator reliabilities \citep{dawid1979maximum,raykar2010learning,rodrigues2018deep}, while later work argues that disagreement can reflect ambiguity, subjectivity, or multiple plausible interpretations \citep{aroyo2015truth,uma2021learning,plank2022problem}. In vision, CIFAR-10H \citep{peterson2019human} made this concrete by releasing human label distributions for image classification and showing the value of training against them; \citet{battleday2020capturing} interpret those distributions as perceptual ambiguity. Related ideas appear in NLP as well, including disagreement-aware modeling \citep{mostafazadehdavani2022dealing}, perspectivist approaches \citep{frenda2025perspectivist,xu2026beyond}, and benchmarks such as ChaosNLI \citep{nie2020what} and LeWiDi \citep{leonardelli2023semeval}. Our paper takes the annotator distribution as given and asks a narrower question: how should that target be delivered during training?

\paragraph{Hard-label delivery and related baselines.}
The closest prior work to our setting comes from two places. \citet{peterson2019human} already explore sampled hard targets from human label distributions, so our contribution is not the sampling idea itself, but a controlled comparison of hard-label delivery against soft-label training with the annotator target held fixed. \citet{sheng2008get} study repeated labeling and include a strategy that expands multiple labels into repeated training examples; multipass can be viewed as a cardinality-preserving version of that idea. More broadly, soft-label training is a form of distributional supervision, as in distillation \citep{hinton2015distilling}, although annotator distributions encode task ambiguity rather than teacher confidence. SLS is also related to DisturbLabel \citep{xie2016disturblabel}, but the sampled labels here come from the empirical annotator distribution rather than from uniform label noise. We include label smoothing \citep{szegedy2016rethinking,muller2019when} and mixup \citep{zhang2018mixup} as familiar baselines, although they address a different question.

\paragraph{Related but distinct literatures.}
Our setting is adjacent to, but distinct from, standard noisy-label learning. There, the goal is usually to recover or correct an underlying clean label under corruption \citep{natarajan2013learning,frenay2014classification,song2023learning}. Here, the annotator distribution is itself the target. This distinction also matters for the geometry results. Prior work links optimization noise and label noise to flatter solutions \citep{hochreiter1997flat,keskar2017largebatch,smith2021origin,damian2021label}, with more recent work emphasizing the role of noise covariance and its alignment with the Hessian \citep{haochen2021shape,wu2022alignment}. We use that literature to interpret basin differences between hard and soft delivery, but geometry is a supporting characterization rather than the paper's main empirical claim.

\section{Method}
\label{sec:method}

Let $\mathcal{D}=\{(x_i,p_i)\}_{i=1}^N$ be a dataset where $p_i\in\Delta^{K-1}$ is the empirical annotator distribution for example $x_i$. Let $z_\theta(x)\in\mathbb{R}^K$ denote the logits and $q_\theta(x)=\softmax(z_\theta(x))$ the model prediction. We compare soft-label training with hard-label delivery methods that keep the same per-example target. Our primary hard-label methods are multipass and stochastic label sampling (SLS), with deterministic control and shuffled SLS as controls.

\paragraph{Soft-label training.}
Soft-label training uses the full empirical annotator distribution in the loss for each example. The objective is the cross-entropy between $p_i$ and $q_\theta(x_i)$:
\begin{align}
\mathcal{L}_{\mathrm{soft}}(\theta)
&=\sum_{i=1}^N H\bigl(p_i,q_\theta(x_i)\bigr) \\
&= -\sum_{i=1}^N \sum_{k=1}^K p_{ik}\log q_{\theta,k}(x_i).
\end{align}

\paragraph{Multipass.}
When raw vote counts are available, multipass expands each example's count vector into its multiset of observed labels, shuffles that sequence once per example with a fixed seed, and cycles through it across epochs via $\texttt{epoch} \bmod \texttt{len(sequence)}$. Each epoch still contains the same $N$ examples; only the delivered hard label changes. This preserves dataset cardinality. For an example with $m_i$ observed votes, the delivered label follows a length-$m_i$ cycle through its shuffled vote multiset, so each observed vote appears once before the sequence repeats.

\paragraph{Stochastic label sampling.}
SLS is the lightweight hard-label alternative when only per-example probabilities are available. At the start of epoch $t$, each example $x_i$ is assigned a hard label sampled from its empirical annotator distribution:
\begin{equation}
    y_i^{(t)} \sim \mathrm{Categorical}(p_i).
\end{equation}
Training then proceeds for that epoch with ordinary hard-label cross-entropy on the sampled labels.

SLS is especially useful here because it enables a clean comparison with soft-label training while keeping the expected objective fixed.

\begin{proposition}[Expected objective and gradient covariance]\label{prop:main}
For a fixed example $x$ with annotator distribution $p$ and predictive probabilities $q=\softmax(z)$,
\begin{align}
\E_{y\sim p}\bigl[-\log q_y\bigr]
&= -\sum_{k=1}^K p_k \log q_k = H(p,q), \\
\E_{y\sim p}\bigl[\nabla_z \ell(z,y)\bigr]
&= q-p, \\
\Cov_{y\sim p}\bigl[\nabla_z \ell(z,y)\bigr]
&= \diag(p)-pp^\top.
\end{align}
If $J_\theta(x)=\partial z_\theta(x)/\partial \theta$ is the logit Jacobian, then
\begin{equation}
\Cov\bigl[\nabla_\theta \ell(z_\theta(x),y)\bigr]
= J_\theta(x)^\top\bigl(\diag(p)-pp^\top\bigr)J_\theta(x).
\end{equation}
\end{proposition}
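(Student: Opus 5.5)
The plan is a direct computation with the one-hot representation of the sampled label. Write $e_y\in\mathbb{R}^K$ for the indicator vector of class $y$, so that $\ell(z,y)=-\log q_y = -z_y+\log\sum_j e^{z_j}$. The first identity is then just the definition of expectation over a finite categorical: $\E_{y\sim p}[-\log q_y]=\sum_k p_k(-\log q_k)=H(p,q)$. Summing over examples shows that the expected SLS epoch loss equals $\mathcal{L}_{\mathrm{soft}}(\theta)$.

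For the gradient, first compute the standard softmax cross-entropy derivative. From $\partial_{z_j}\log\sum_i e^{z_i}=q_j$ we get $\nabla_z\ell(z,y)=q-e_y$. Since $q$ does not depend on $y$ and $\E_{y\sim p}[e_y]=p$, linearity gives $\E[\nabla_z\ell]=q-p$.

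For the covariance, note that subtracting the constant $q$ does not change it, so $\Cov[\nabla_z\ell]=\Cov[e_y]=\E[e_ye_y^\top]-pp^\top$. Because $e_ye_y^\top$ has a single $1$ at position $(y,y)$, its expectation is $\diag(p)$. This is the only step requiring care: one must observe that the randomness enters only through $e_y$, so $q$ drops out.

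Finally, fix $\theta$ and $x$, so that $z=z_\theta(x)$ is deterministic and only $y$ is random. By the chain rule, $\nabla_\theta\ell(z_\theta(x),y)=J_\theta(x)^\top\nabla_z\ell(z,y)$. This is a fixed linear map applied to a random vector, so $\Cov[Av]=A\,\Cov[v]\,A^\top$ with $A=J_\theta(x)^\top$ gives $J_\theta(x)^\top(\diag(p)-pp^\top)J_\theta(x)$. I expect no real obstacle. The main point to state explicitly is that the covariance is over label sampling alone, conditional on the current parameters and input, and does not include minibatch or data-order randomness.
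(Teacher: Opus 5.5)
Your proposal is correct and takes essentially the same approach as the paper: compute $\nabla_z\ell = q - e_y$, use $\E[e_y]=p$, and note that the deterministic $q$ drops out of the covariance, leaving $\Cov(e_y)=\diag(p)-pp^\top$. You also make explicit the parameter-space step, $\nabla_\theta\ell = J_\theta(x)^\top\nabla_z\ell$ combined with $\Cov[Av]=A\,\Cov[v]\,A^\top$, which the paper's proof leaves implicit.
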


\begin{proof}
The first identity is linearity of expectation. For hard-label cross-entropy, $\nabla_z\ell(z,y)=q-e_y$, where $e_y$ is the one-hot vector for class $y$. Taking expectation over $y\sim p$ gives $q-p$. The covariance follows because the deterministic $q$ term cancels and $\Cov(e_y)=\diag(p)-pp^\top$.
\end{proof}

\noindent\textbf{Takeaway.} SLS and soft-label training optimize the same objective in expectation. The difference is the additional gradient variance introduced by sampling.

The per-step gradient is $q-e_y$ under SLS and $q-p$ under soft-label training, and
\begin{equation}\label{eq:sqnorm}
\E_{y\sim p}\lVert q-e_y\rVert^2
= \lVert q-p\rVert^2 + 1 - \lVert p\rVert_2^2.
\end{equation}
This additional variance vanishes for one-hot targets and grows with annotator disagreement.

\paragraph{Controls.}
Deterministic control uses the same cycling procedure as multipass with a different fixed shuffle seed. It is not an independent method, but a traversal-order ablation of multipass. Shuffled SLS applies a single permutation to the annotator distributions across examples before training. This preserves marginal disagreement statistics while breaking the pairing between each input and its own annotator distribution.

\section{Experimental Setup}\label{sec:setup}

We evaluate on CIFAR-10H \citep{peterson2019human}, which provides human label distributions over 10{,}000 CIFAR-10 test images. We use only these images with an 80/20 stratified split per seed; no CIFAR-10 training images are used. Results are reported across 10 seeds in the main comparison unless otherwise stated. The completed annotator-count sweep and the hard-delivery ablation use 5 paired seeds.

Our architecture is a CIFAR-adapted ResNet-18 initialized from scratch: the ImageNet-style $7\times7$ stem is replaced by a $3\times3$ convolution with stride 1, and the initial max-pooling layer is removed. Training uses SGD with learning rate $0.1$, momentum $0.9$, weight decay $5\times 10^{-4}$, cosine annealing for 200 epochs, and batch size 128. Training uses random crop and horizontal flip; evaluation uses deterministic normalization-only preprocessing.

Our central comparison is between soft-label training and hard-label delivery. The hard-label family consists of multipass and SLS, with deterministic control and shuffled SLS used as controls in the hard-delivery ablation. For broader context, we also report majority vote, label smoothing, and mixup as standard baselines. Following a proper-scoring-rule view of posterior evaluation, we treat \textbf{soft NLL} as the primary metric and use \textbf{KL-to-annotator} and \textbf{soft Brier} as supporting proper scoring rules \citep{gneiting2007strictly,ferrer2024evaluating}. We report \textbf{hard accuracy} on \emph{all} evaluation examples (\texttt{hard\_acc\_all}), \textbf{equal-mass ECE}, and \textbf{entropy correlation}, defined as the Spearman correlation between predicted-distribution entropy and annotator-distribution entropy across examples, as secondary summaries.

\paragraph{Hessian protocol.}
All Hessian quantities are computed on the best-validation checkpoint (lowest soft NLL) in \texttt{eval} mode with batch-normalization statistics frozen. The top eigenvalue $\lambda_{\max}$ is estimated by power iteration with Hessian-vector products, and the trace is estimated by Hutchinson's method. Both use soft cross-entropy against annotator distributions on the evaluation split. We report both the full evaluation split and a high-disagreement slice.

\paragraph{Ablation construction.}
CIFAR-10H provides integer vote counts per example, which allow us to instantiate multipass directly as defined in Sec.~\ref{sec:method}. Deterministic control uses the same construction with a different fixed shuffle seed. For the annotator-count sweep ($K\in\{5,10,25,50\}$), subsampled counts are drawn via $\texttt{np.random.multinomial}(K,p_i)$ per example, producing integer count vectors that sum exactly to $K$.

\section{Results}
We organize the results around the regime split introduced above: sparse-annotation settings first, full-distribution parity second, and basin geometry as supporting evidence.

\subsection{Sparse-annotation regime}
When only a few annotations per example are available, hard-label delivery is better than soft-label training on soft NLL. Table~\ref{tab:sweep_softnll} shows the same pattern across all three hard-delivery methods and all four annotator counts: every hard-delivery method improves on soft labels at every $K$. In 9 of the 12 method-by-$K$ comparisons, all five paired seeds agree on direction, yielding the minimum raw one-sided Wilcoxon $p=0.03125$. With 5 paired seeds the minimum attainable raw $p$-value is $0.03125$, so Holm correction over the full 12-cell family cannot reject any individual cell even when every seed agrees on direction. We therefore treat the uncorrected $p$-values and the 9/12 perfect-agreement count as the primary evidence, and include Holm as a conservative reference.

\begin{table}[!t]
\centering
\footnotesize
\caption{Soft-NLL results for the annotator-count sweep (5 paired seeds per setting). Entries for hard-delivery methods report mean soft NLL with the raw one-sided paired Wilcoxon $p$ against the corresponding Soft labels row in parentheses. Bold indicates the best value in each column.}
\label{tab:sweep_softnll}
\resizebox{\columnwidth}{!}{%
\begin{tabular}{lcccc}
\toprule
Method & $K=5$ & $K=10$ & $K=25$ & $K=50$ \\
\midrule
Soft labels & 0.5860 & 0.5785 & 0.5388 & 0.5628 \\
SLS & 0.5599 (0.031) & 0.5485 (0.031) & 0.5169 (0.063) & 0.5291 (0.094) \\
Multipass & 0.5649 (0.063) & \textbf{0.5371} (0.031) & 0.5117 (0.031) & 0.5241 (0.031) \\
Deterministic control & \textbf{0.5555} (0.031) & 0.5388 (0.031) & \textbf{0.5077} (0.031) & \textbf{0.5231} (0.031) \\
\bottomrule
\end{tabular}
}
\end{table}

The effect is strongest at $K=5$ and $K=10$, but it persists at $K=25$ and $K=50$, especially for multipass and deterministic control. Hard accuracy remains close across methods, while entropy correlation is modestly higher for hard delivery as $K$ grows. Figure~\ref{fig:annot_sweep} shows the same pattern across the sweep.

\begin{figure*}[!t]
\centering
\includegraphics[width=0.92\textwidth]{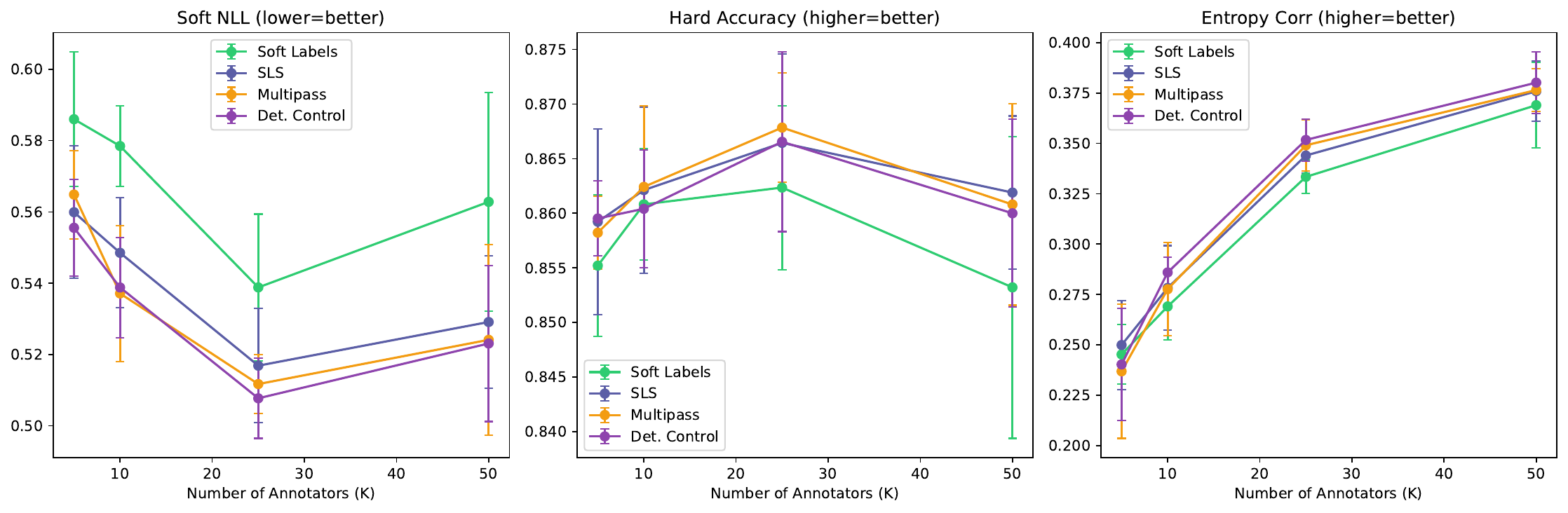}
\caption{Annotator-count sweep on CIFAR-10H. The soft-NLL panel shows the main sparse-regime result: hard-delivery methods are better than soft-label training in all 12 method $\times$ annotator-count cells.}
\label{fig:annot_sweep}
\end{figure*}

Figure~\ref{fig:sparse_target_error} sharpens that result. Hard-delivery improvements are small when the sparse empirical target is already close to the full annotator distribution and grow as that gap increases. Across methods and $K$, the per-seed Spearman correlation between JS distance and hard-delivery improvement is modest but consistently positive, roughly 0.05--0.16, supporting a target-estimation interpretation of the sweep.

\begin{figure*}[!t]
\centering
\includegraphics[width=\textwidth]{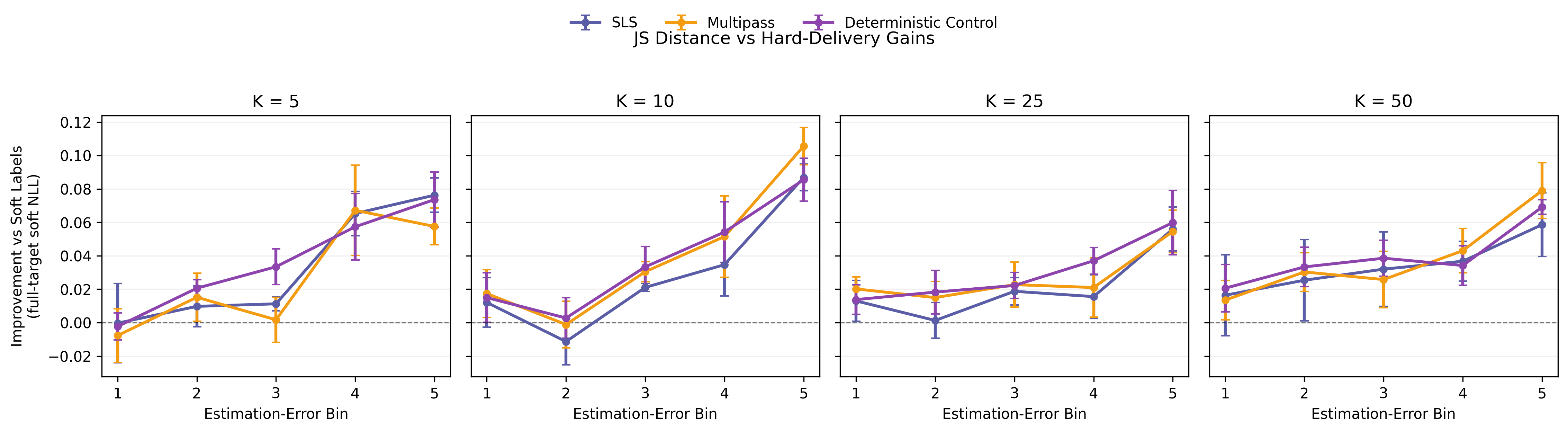}
\caption{Sparse-target diagnostic. Improvements over soft labels remain near zero in the lowest-error bins and grow in higher-error bins.}
\label{fig:sparse_target_error}
\end{figure*}

A resampling-frequency probe is consistent with the same interpretation: holding sampled labels fixed longer monotonically worsens soft NLL, from 0.5027 under every-epoch resampling to 0.5458, 0.6054, and 0.6689 when labels are held fixed for 5, 10, and 50 epochs. Every stale-target condition is worse in all five paired seeds than every-epoch resampling.

\subsection{Full-distribution regime}\label{sec:main}
With full annotator distributions, hard-label delivery and soft-label training reach the same endpoint regime. Table~\ref{tab:main} shows the 10-seed main comparison. SLS does not differ significantly from soft-label training on soft NLL ($p=0.6953$), hard accuracy ($p=0.6250$), equal-mass ECE ($p=0.9219$), or entropy correlation ($p=0.3750$). Both substantially outperform majority vote and label smoothing on posterior-quality metrics, while mixup attains the highest hard accuracy but markedly worse calibration diagnostics.

\begin{table}[!t]
\centering
\small
\caption{Main CIFAR-10H comparison across 10 seeds. Entries are mean $\pm$ across-seed SD. Hard Acc.\ denotes \texttt{hard\_acc\_all}.}
\label{tab:main}
\resizebox{\columnwidth}{!}{%
\begin{tabular}{lcccc}
\toprule
Method & Soft NLL$\downarrow$ & Hard Acc.$\uparrow$ & ECE$_{\mathrm{eqmass}}\downarrow$ & EntCorr$\uparrow$ \\
\midrule
Label smoothing  & 0.6263 $\pm$ 0.0204 & 0.8590 $\pm$ 0.0088 & 0.0598 $\pm$ 0.0059 & 0.2117 $\pm$ 0.0218 \\
Majority vote    & 0.7284 $\pm$ 0.0401 & 0.8570 $\pm$ 0.0107 & 0.0704 $\pm$ 0.0058 & 0.2902 $\pm$ 0.0219 \\
Mixup            & 0.5526 $\pm$ 0.0129 & \textbf{0.8824} $\pm$ 0.0046 & 0.0977 $\pm$ 0.0063 & 0.2499 $\pm$ 0.0145 \\
SLS              & \textbf{0.5052} $\pm$ 0.0263 & \textbf{0.8695} $\pm$ 0.0105 & 0.0186 $\pm$ 0.0039 & \textbf{0.3946} $\pm$ 0.0192 \\
Soft labels      & 0.5096 $\pm$ 0.0233 & 0.8687 $\pm$ 0.0081 & \textbf{0.0185} $\pm$ 0.0040 & 0.3909 $\pm$ 0.0190 \\
\bottomrule
\end{tabular}
}
\end{table}

The main point in this regime is not that SLS wins, but that once each example is trained against the correct annotator distribution, endpoint posterior-quality metrics depend little on whether that target is delivered softly or through hard labels. The reliability curves support the same conclusion: smooth ECE is 0.019 for both SLS and soft labels, versus 0.082 for majority vote, and the SLS and soft-label curves are visually almost indistinguishable (Appendix~\ref{app:reliability}).

\paragraph{Hard-label family consistency.}
The full-distribution parity result extends across the hard-label family. Table~\ref{tab:ablation} shows that multipass, deterministic control, and SLS are statistically indistinguishable on every reported metric in the 5-seed hard-delivery comparison, while shuffled SLS collapses to near-chance accuracy and extremely poor soft NLL. This is the key family check: the count-preserving instance (multipass), its traversal-order ablation (deterministic control), and the lightweight stochastic alternative (SLS) all match soft labels on the full-distribution endpoints, whereas breaking the example-to-distribution match destroys performance.

\begin{table}[!t]
\centering
\small
\caption{Hard-delivery family comparison. The Soft labels row is copied from the main 10-seed summary for context; the remaining rows come from the 5-seed hard-delivery comparison.}
\label{tab:ablation}
\resizebox{\columnwidth}{!}{%
\begin{tabular}{lcccc}
\toprule
Method & Soft NLL$\downarrow$ & Hard Acc.$\uparrow$ & ECE$_{\mathrm{eqmass}}\downarrow$ & EntCorr$\uparrow$ \\
\midrule
Soft labels (ref.)     & 0.5096 $\pm$ 0.0233 & 0.8687 $\pm$ 0.0081 & \textbf{0.0185} $\pm$ 0.0040 & 0.3909 $\pm$ 0.0190 \\
\midrule
SLS                    & 0.5035 $\pm$ 0.0335 & 0.8710 $\pm$ 0.0129 & 0.0193 $\pm$ 0.0045 & 0.3956 $\pm$ 0.0159 \\
Deterministic control  & \textbf{0.4921} $\pm$ 0.0162 & \textbf{0.8724} $\pm$ 0.0050 & 0.0209 $\pm$ 0.0022 & 0.3963 $\pm$ 0.0160 \\
Multipass              & 0.4942 $\pm$ 0.0123 & 0.8714 $\pm$ 0.0063 & \textbf{0.0191} $\pm$ 0.0044 & \textbf{0.4000} $\pm$ 0.0287 \\
Shuffled SLS           & 2.2973 $\pm$ 0.0024 & 0.1199 $\pm$ 0.0128 & 0.0293 $\pm$ 0.0099 & $-0.0058$ $\pm$ 0.0556 \\
\bottomrule
\end{tabular}
}
\end{table}

Against SLS, multipass has $p=0.4375$ on soft NLL and $p\ge 0.625$ on the other metrics; deterministic control shows the same pattern. Shuffled SLS, by contrast, drops to 12\% hard accuracy and attains the minimum attainable two-sided $p=0.0625$ on several columns with only five paired seeds.

\subsection{Geometry and supporting diagnostics}\label{sec:geometry}
The proposition predicts that SLS injects label-sampling variance proportional to annotator disagreement. We verify that prediction empirically in Appendix~\ref{app:grad_variance}: across seeds, the Spearman correlation between annotator entropy and last-layer gradient variance averages 0.939 and ranges from 0.919 to 0.953. The extra optimization noise introduced by SLS is therefore highly structured rather than incidental, and is concentrated on the examples where annotators disagree most.

Table~\ref{tab:hessian} shows the main geometry result. In the 10-seed main comparison, SLS has $\lambda_{\max}=104.9\pm 8.8$ versus $242.2\pm 36.2$ for soft labels, and trace $1633.9\pm 50.6$ versus $4946.0\pm 943.2$. The 5-seed hard-delivery comparison matches that pattern: multipass, deterministic control, and SLS occupy essentially the same flatness regime, while shuffled SLS is formally flattest but achieves near-chance accuracy. Because deterministic control has zero sampling stochasticity, this ordering argues against stochasticity alone as the primary explanation.

\begin{table}[!t]
\centering
\small
\caption{Hessian geometry across methods on CIFAR-10H. ``Full'' uses the full evaluation split and ``high'' restricts to the high-disagreement slice.}
\label{tab:hessian}
\resizebox{\columnwidth}{!}{%
\begin{tabular}{lcccc}
\toprule
Method & $\lambda_{\max}$ (full) & Trace (full) & $\lambda_{\max}$ (high) & Trace (high) \\
\midrule
SLS (main)     & $104.9 \pm 8.8$   & $1633.9 \pm 50.6$  & $314.0 \pm 57.3$  & $3496.8 \pm 360.3$ \\
Soft labels    & $242.2 \pm 36.2$  & $4946.0 \pm 943.2$ & $780.6 \pm 103.3$ & $10779.6 \pm 1811.9$ \\
\midrule
SLS (ablation) & $107.8 \pm 9.9$   & $1642.9 \pm 25.7$  & $310.3 \pm 22.0$  & $3480.4 \pm 255.2$ \\
Deterministic control & $101.6 \pm 8.1$   & $1581.6 \pm 61.2$  & $314.6 \pm 30.4$  & $3481.8 \pm 250.7$ \\
Multipass      & $103.8 \pm 4.3$   & $1571.4 \pm 72.4$  & $302.2 \pm 29.6$  & $3438.4 \pm 339.5$ \\
Shuffled SLS   & $18.4 \pm 13.3$   & $33.9 \pm 26.0$    & $20.5 \pm 18.1$   & $38.8 \pm 44.9$ \\
\bottomrule
\end{tabular}
}
\end{table}

Additional diagnostics support the same picture. The mean loss barrier between SLS and soft-label checkpoints is 2.05, which is well above zero and therefore consistent with the two solutions occupying distinct basins despite near-identical endpoint loss. SLS also yields more reproducible representations: within-method CKA is 0.920 for SLS versus 0.887 for soft labels, and Grad-CAM cross-seed stability is 0.901 versus 0.804 overall, and 0.861 versus 0.756 on the high-entropy slice.

OOD detection provides descriptive support for the same pattern. On SVHN, hard-label delivery exceeds soft-label training on five of six detectors in the family comparison. On CIFAR-100, SLS improves AUROC over soft labels on every reported score type, with paired $p=0.0186$ on Energy and ODIN. We report the full tables in Appendix~\ref{app:ood} and treat the family-level comparisons as descriptive rather than paired hypothesis tests, since the soft-label column comes from the 10-seed main comparison whereas the hard-label family entries come from the 5-seed ablation.

\section{Discussion and Conclusion}
The results point to a clear regime split. When only a few annotations per example are available, hard-label delivery is directionally better than soft-label training on soft NLL, with larger improvements where the sparse empirical target is farther from the full annotator distribution. When full annotator distributions are available, that difference disappears at the endpoint: multipass, deterministic control, SLS, and soft-label training all reach the same performance regime, while shuffled SLS fails once the example-to-distribution match is broken.

Taken together, these results suggest that preserving the example-to-distribution match is the first-order ingredient in the full-distribution regime, while delivery format matters more when the empirical target is sparse or stale. In that regime, the choice of delivery format begins to matter not only for optimization path and basin geometry, but also for endpoint soft NLL.

This also helps place the paper in the epistemic-intelligence setting. We interpret annotator disagreement as useful here not because it behaves like arbitrary label corruption, but because it carries information about uncertainty in the target itself. The full-distribution results show that this uncertainty is best preserved by keeping each example matched to its own annotator distribution. The sparse-regime results then show that when this uncertainty is only partially observed, delivery format begins to matter. This is the sense in which the paper treats disagreement as epistemic signal rather than as noise to be collapsed away.

The geometry results support the same picture. Hard-label delivery converges to flatter basins than soft-label training, and the descriptive OOD comparisons point in the same direction. Deterministic control is useful here because it places those effects on hard-label delivery itself rather than on sampling stochasticity alone.

From a practical standpoint, multipass is the recommended default when raw vote counts are available: it is count-preserving, deterministic, and reproducible. SLS is the lightweight alternative when only per-example probabilities are available, for example when distributions come from a pretrained teacher or a smoothed estimator. Deterministic control is an internal validity check rather than a method users are meant to deploy.

\paragraph{Limitations.}
This study is centered on a single dataset, CIFAR-10H, and a single primary architecture, a CIFAR-adapted ResNet-18, so transfer to NLP disagreement benchmarks such as ChaosNLI or LeWiDi, or to larger vision models, remains to be tested. The main 10-seed SLS-versus-soft-label comparison and the 5-seed hard-delivery comparison use different seed budgets, and the family-level OOD tables mix these two sources; we therefore treat those OOD comparisons as descriptive support rather than paired hypothesis tests. In the sparse annotator-count sweep, all 12 hard-delivery cells are directionally better than soft labels on soft NLL, but none survives Holm correction over the 12-cell family. The sparse-annotation conditions are also simulated by subsampling CIFAR-10H's dense annotation pool, which may not reproduce the annotator-selection and task-specific disagreement structure of genuinely sparse datasets. Finally, the geometry analyses are observational: Proposition~\ref{prop:main} gives a per-step variance decomposition, but the link from delivery format to basin geometry is characterized rather than derived.

\bibliographystyle{plainnat}
\bibliography{references}

@article{dawid1979maximum,
  author  = {Dawid, A. Philip and Skene, Allan M.},
  title   = {Maximum Likelihood Estimation of Observer Error-Rates Using the {EM} Algorithm},
  journal = {Journal of the Royal Statistical Society: Series C (Applied Statistics)},
  volume  = {28},
  number  = {1},
  pages   = {20--28},
  year    = {1979},
  doi     = {10.2307/2346806}
}

@article{raykar2010learning,
  author  = {Raykar, Vikas C. and Yu, Shipeng and Zhao, Linda H. and Valadez, Gerardo Hermosillo and Florin, Charles and Bogoni, Luca and Moy, Linda},
  title   = {Learning From Crowds},
  journal = {Journal of Machine Learning Research},
  volume  = {11},
  pages   = {1297--1322},
  year    = {2010},
  url     = {https://jmlr.org/papers/v11/raykar10a.html}
}

@article{aroyo2015truth,
  author  = {Aroyo, Lora and Welty, Chris},
  title   = {Truth Is a Lie: Crowd Truth and the Seven Myths of Human Annotation},
  journal = {AI Magazine},
  volume  = {36},
  number  = {1},
  pages   = {15--24},
  year    = {2015},
  doi     = {10.1609/aimag.v36i1.2564}
}

@inproceedings{peterson2019human,
  author    = {Peterson, Joshua C. and Battleday, Ruairidh M. and Griffiths, Thomas L. and Russakovsky, Olga},
  title     = {Human Uncertainty Makes Classification More Robust},
  booktitle = {Proceedings of the IEEE/CVF International Conference on Computer Vision (ICCV)},
  pages     = {9617--9626},
  year      = {2019},
  doi       = {10.1109/ICCV.2019.00971}
}

@article{battleday2020capturing,
  author  = {Battleday, Ruairidh M. and Peterson, Joshua C. and Griffiths, Thomas L.},
  title   = {Capturing Human Categorization of Natural Images by Combining Deep Networks and Cognitive Models},
  journal = {Nature Communications},
  volume  = {11},
  pages   = {5418},
  year    = {2020},
  doi     = {10.1038/s41467-020-18946-z}
}

@article{uma2021learning,
  author  = {Uma, Alexandra N. and Fornaciari, Tommaso and Hovy, Dirk and Paun, Silviu and Plank, Barbara and Poesio, Massimo},
  title   = {Learning From Disagreement: A Survey},
  journal = {Journal of Artificial Intelligence Research},
  volume  = {72},
  pages   = {1385--1470},
  year    = {2021},
  doi     = {10.1613/jair.1.12752}
}

@inproceedings{plank2022problem,
  author    = {Plank, Barbara},
  title     = {The ``Problem'' of Human Label Variation: On Ground Truth in Data, Modeling and Evaluation},
  booktitle = {Proceedings of the 2022 Conference on Empirical Methods in Natural Language Processing (EMNLP)},
  pages     = {10671--10682},
  year      = {2022},
  doi       = {10.18653/v1/2022.emnlp-main.731}
}

@article{mostafazadehdavani2022dealing,
  author  = {Mostafazadeh Davani, Aida and D{\'\i}az, Mark and Prabhakaran, Vinodkumar},
  title   = {Dealing with Disagreements: Looking Beyond the Majority Vote in Subjective Annotations},
  journal = {Transactions of the Association for Computational Linguistics},
  volume  = {10},
  pages   = {92--110},
  year    = {2022},
  doi     = {10.1162/tacl_a_00449}
}

@article{frenda2025perspectivist,
  author  = {Frenda, Simona and Abercrombie, Gavin and Basile, Valerio and Pedrani, Alessandro and Panizzon, Raffaella and Cignarella, Alessandra Teresa and Marco, Cristina and Bernardi, Davide},
  title   = {Perspectivist Approaches to Natural Language Processing: A Survey},
  journal = {Language Resources and Evaluation},
  volume  = {59},
  number  = {2},
  pages   = {1719--1746},
  year    = {2025},
  doi     = {10.1007/s10579-024-09766-4}
}

@article{xu2026beyond,
  author  = {Xu, Yinuo and Jurgens, David},
  title   = {Beyond Consensus: Perspectivist Modeling and Evaluation of Annotator Disagreement in {NLP}},
  journal = {arXiv preprint arXiv:2601.09065},
  year    = {2026},
  url     = {https://arxiv.org/abs/2601.09065}
}

@inproceedings{nie2020what,
  author    = {Nie, Yixin and Zhou, Xiang and Bansal, Mohit},
  title     = {What Can We Learn from Collective Human Opinions on Natural Language Inference Data?},
  booktitle = {Proceedings of the 2020 Conference on Empirical Methods in Natural Language Processing (EMNLP)},
  pages     = {9131--9143},
  year      = {2020},
  doi       = {10.18653/v1/2020.emnlp-main.734}
}

@inproceedings{leonardelli2023semeval,
  author    = {Leonardelli, Elisa and Abercrombie, Gavin and Almanea, Dina and Basile, Valerio and Fornaciari, Tommaso and Plank, Barbara and Rieser, Verena and Uma, Alexandra and Poesio, Massimo},
  title     = {{SemEval-2023} {Task} 11: Learning with Disagreements ({LeWiDi})},
  booktitle = {Proceedings of the 17th International Workshop on Semantic Evaluation (SemEval-2023)},
  pages     = {2304--2318},
  year      = {2023},
  doi       = {10.18653/v1/2023.semeval-1.314}
}

@inproceedings{sheng2008get,
  author    = {Sheng, Victor S. and Provost, Foster and Ipeirotis, Panagiotis G.},
  title     = {Get Another Label? {I}mproving Data Quality and Data Mining Using Multiple, Noisy Labelers},
  booktitle = {Proceedings of the 14th ACM SIGKDD International Conference on Knowledge Discovery and Data Mining (KDD)},
  pages     = {614--622},
  year      = {2008},
  doi       = {10.1145/1401890.1401965}
}

@article{hinton2015distilling,
  author  = {Hinton, Geoffrey and Vinyals, Oriol and Dean, Jeff},
  title   = {Distilling the Knowledge in a Neural Network},
  journal = {arXiv preprint arXiv:1503.02531},
  year    = {2015},
  url     = {https://arxiv.org/abs/1503.02531}
}

@inproceedings{szegedy2016rethinking,
  author    = {Szegedy, Christian and Vanhoucke, Vincent and Ioffe, Sergey and Shlens, Jonathon and Wojna, Zbigniew},
  title     = {Rethinking the Inception Architecture for Computer Vision},
  booktitle = {Proceedings of the IEEE Conference on Computer Vision and Pattern Recognition (CVPR)},
  pages     = {2818--2826},
  year      = {2016},
  doi       = {10.1109/CVPR.2016.308}
}

@inproceedings{muller2019when,
  author    = {M{\"u}ller, Rafael and Kornblith, Simon and Hinton, Geoffrey},
  title     = {When Does Label Smoothing Help?},
  booktitle = {Advances in Neural Information Processing Systems (NeurIPS)},
  volume    = {32},
  year      = {2019},
  url       = {https://proceedings.neurips.cc/paper/2019/hash/f1748d6b0fd9d439f71450117eba2725-Abstract.html}
}

@inproceedings{zhang2018mixup,
  author    = {Zhang, Hongyi and Ciss{\'e}, Moustapha and Dauphin, Yann N. and Lopez-Paz, David},
  title     = {mixup: Beyond Empirical Risk Minimization},
  booktitle = {International Conference on Learning Representations (ICLR)},
  year      = {2018},
  url       = {https://openreview.net/forum?id=r1Ddp1-Rb}
}

@inproceedings{xie2016disturblabel,
  author    = {Xie, Lingxi and Wang, Jingdong and Wei, Zhen and Wang, Meng and Tian, Qi},
  title     = {{DisturbLabel}: Regularizing {CNN} on the Loss Layer},
  booktitle = {Proceedings of the IEEE Conference on Computer Vision and Pattern Recognition (CVPR)},
  pages     = {4753--4762},
  year      = {2016},
  doi       = {10.1109/CVPR.2016.514}
}

@inproceedings{natarajan2013learning,
  author    = {Natarajan, Nagarajan and Dhillon, Inderjit S. and Ravikumar, Pradeep and Tewari, Ambuj},
  title     = {Learning with Noisy Labels},
  booktitle = {Advances in Neural Information Processing Systems (NeurIPS)},
  volume    = {26},
  year      = {2013},
  url       = {https://proceedings.neurips.cc/paper/2013/hash/3871bd64012152bfb53fdf04b401193f-Abstract.html}
}

@article{frenay2014classification,
  author  = {Fr{\'e}nay, Beno{\^\i}t and Verleysen, Michel},
  title   = {Classification in the Presence of Label Noise: A Survey},
  journal = {IEEE Transactions on Neural Networks and Learning Systems},
  volume  = {25},
  number  = {5},
  pages   = {845--869},
  year    = {2014},
  doi     = {10.1109/TNNLS.2013.2292894}
}

@article{song2023learning,
  author  = {Song, Hwanjun and Kim, Minseok and Park, Dongmin and Shin, Yooju and Lee, Jae-Gil},
  title   = {Learning from Noisy Labels with Deep Neural Networks: A Survey},
  journal = {IEEE Transactions on Neural Networks and Learning Systems},
  volume  = {34},
  number  = {11},
  pages   = {8135--8153},
  year    = {2023},
  doi     = {10.1109/TNNLS.2022.3152527}
}

@article{gneiting2007strictly,
  author  = {Gneiting, Tilmann and Raftery, Adrian E.},
  title   = {Strictly Proper Scoring Rules, Prediction, and Estimation},
  journal = {Journal of the American Statistical Association},
  volume  = {102},
  number  = {477},
  pages   = {359--378},
  year    = {2007},
  doi     = {10.1198/016214506000001437}
}

@article{ferrer2024evaluating,
  author  = {Ferrer, Luciana and Ramos, Daniel},
  title   = {Evaluating Posterior Probabilities: Decision Theory, Proper Scoring Rules, and Calibration},
  journal = {arXiv preprint arXiv:2408.02841},
  year    = {2024},
  url     = {https://arxiv.org/abs/2408.02841}
}

@article{hochreiter1997flat,
  author  = {Hochreiter, Sepp and Schmidhuber, J{\"u}rgen},
  title   = {Flat Minima},
  journal = {Neural Computation},
  volume  = {9},
  number  = {1},
  pages   = {1--42},
  year    = {1997},
  doi     = {10.1162/neco.1997.9.1.1}
}

@inproceedings{keskar2017largebatch,
  author    = {Keskar, Nitish Shirish and Mudigere, Dheevatsa and Nocedal, Jorge and Smelyanskiy, Mikhail and Tang, Ping Tak Peter},
  title     = {On Large-Batch Training for Deep Learning: Generalization Gap and Sharp Minima},
  booktitle = {International Conference on Learning Representations (ICLR)},
  year      = {2017},
  url       = {https://openreview.net/forum?id=H1oyRlYgg}
}

@inproceedings{smith2021origin,
  author    = {Smith, Samuel L. and Dherin, Benoit and Barrett, David G. T. and De, Soham},
  title     = {On the Origin of Implicit Regularization in Stochastic Gradient Descent},
  booktitle = {International Conference on Learning Representations (ICLR)},
  year      = {2021},
  url       = {https://openreview.net/forum?id=rq_Qr0c1Hyo}
}

@inproceedings{damian2021label,
  author    = {Damian, Alex and Ma, Tengyu and Lee, Jason D.},
  title     = {Label Noise {SGD} Provably Prefers Flat Global Minimizers},
  booktitle = {Advances in Neural Information Processing Systems (NeurIPS)},
  volume    = {34},
  year      = {2021},
  url       = {https://proceedings.neurips.cc/paper/2021/hash/e6af401c28c1790eaef7d55c92ab6ab6-Abstract.html}
}

@inproceedings{haochen2021shape,
  author    = {HaoChen, Jeff Z. and Wei, Colin and Lee, Jason D. and Ma, Tengyu},
  title     = {Shape Matters: Understanding the Implicit Bias of the Noise Covariance},
  booktitle = {Conference on Learning Theory (COLT)},
  year      = {2021},
  url       = {https://arxiv.org/abs/2006.08680}
}

@inproceedings{wu2022alignment,
  author    = {Wu, Lei and Wang, Mingze and Su, Weijie J.},
  title     = {The Alignment Property of {SGD} Noise and How It Helps Select Flat Minima: A Stability Analysis},
  booktitle = {Advances in Neural Information Processing Systems (NeurIPS)},
  volume    = {35},
  year      = {2022},
  url       = {https://arxiv.org/abs/2207.02628}
}

@inproceedings{rodrigues2018deep,
  author    = {Filipe Rodrigues and Francisco C. Pereira},
  title     = {Deep Learning from Crowds},
  booktitle = {Proceedings of the AAAI Conference on Artificial Intelligence},
  volume    = {32},
  pages     = {1611--1618},
  year      = {2018}
}

\clearpage
\appendix

\section{Supplementary Material}
\label{sec:appendix}

This supplement collects supporting analyses, additional diagnostics, and implementation details.

\subsection{Reproducibility Details}\label{app:repro}

\paragraph{Ablation construction.}
CIFAR-10H provides raw integer vote counts per example. \emph{Multipass} expands each count vector into the corresponding multiset of observed labels, shuffles that list once per example using a fixed seed, and cycles through it across epochs via $\texttt{epoch} \bmod \texttt{len(sequence)}$. \emph{Deterministic control} is identical except for a different fixed shuffle seed (offset by 1000), and should therefore be read as an ablation of multipass rather than as an independent method. \emph{Shuffled SLS} applies a single permutation to the annotator distributions across examples before training, preserving marginal disagreement statistics while destroying the input-to-distribution mapping.

For the annotator-count sweep ($K \in \{5,10,25,50\}$), subsampled counts are drawn via $\texttt{np.random.multinomial}(K, p_i)$ per example. The subsampling seed is coupled to the seed that also determines the train/eval split, so cross-seed comparisons include both split variation and subsampling variation. The different $K$ settings are generated as independent multinomial draws rather than nested subsets.

\paragraph{Hessian protocol.}
All Hessian quantities are computed on the best-validation checkpoint in \texttt{eval} mode with batch-normalization statistics frozen. The top eigenvalue $\lambda_{\max}$ is estimated by power iteration with Hessian-vector products, and the trace by Hutchinson's estimator. The ``full'' columns use the complete evaluation split; the ``high'' columns restrict to the high-disagreement slice.

\subsection{Full-Distribution Diagnostics}

\paragraph{Reliability plots.}
\label{app:reliability}
The reliability curves support the full-distribution parity result reported in the main text.

\begin{figure*}[!t]
\centering
\includegraphics[width=0.92\textwidth]{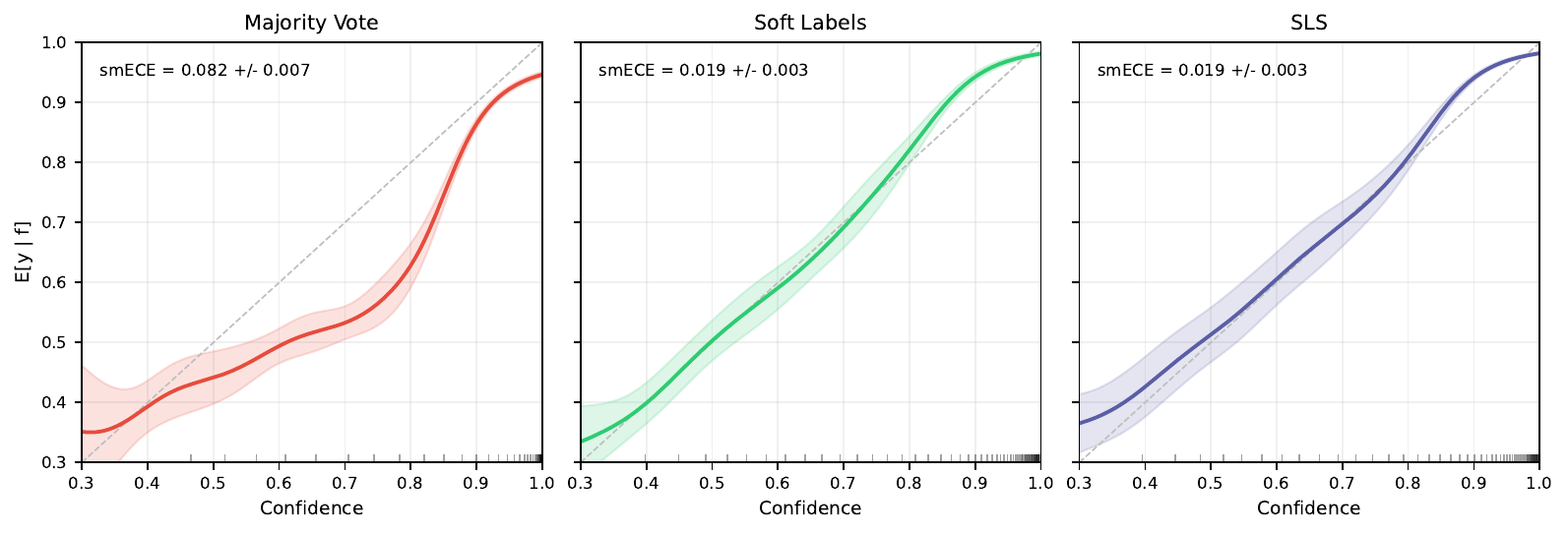}
\caption{Smooth reliability diagrams for majority vote, soft labels, and SLS from the 10-seed main comparison. Majority vote is systematically worse, while SLS and soft-label training closely track the diagonal and remain visually almost indistinguishable.}
\label{fig:smooth_reliability}
\end{figure*}

\paragraph{Additional proper scoring rules.}
Table~\ref{tab:scoring_rules} reports two supporting proper scoring rules. The same pattern holds: SLS and soft labels remain closely matched, while majority vote and label smoothing are substantially worse.

\begin{table}[!t]
\centering
\small
\caption{Additional proper scoring rules on CIFAR-10H. Entries are mean $\pm$ across-seed SD over 10 seeds.}
\label{tab:scoring_rules}
\begin{tabular}{lcc}
\toprule
Method & KL-to-annotator$\downarrow$ & Soft Brier$\downarrow$ \\
\midrule
SLS              & \textbf{0.3515} $\pm$ 0.0260 & 0.1553 $\pm$ 0.0110 \\
Soft labels      & 0.3558 $\pm$ 0.0228 & \textbf{0.1548} $\pm$ 0.0096 \\
Majority vote    & 0.5746 $\pm$ 0.0397 & 0.1943 $\pm$ 0.0152 \\
Label smoothing  & 0.4726 $\pm$ 0.0204 & 0.1786 $\pm$ 0.0094 \\
Mixup            & 0.3988 $\pm$ 0.0123 & 0.1500 $\pm$ 0.0049 \\
\bottomrule
\end{tabular}
\end{table}

\subsection{Sparse-Target Diagnostics}

\paragraph{Annotator-count Hessian sweep.}
Table~\ref{tab:hessian_sweep} shows that the flatness gap persists across the annotator-count sweep. At every tested $K$, SLS remains flatter than soft-label training on both the full split and the high-disagreement slice.

\begin{table}[!t]
\centering
\footnotesize
\caption{Top Hessian eigenvalues in the five-seed annotator-count sweep. SLS remains flatter than soft-label training across all $K$.}
\label{tab:hessian_sweep}
\resizebox{\columnwidth}{!}{%
\begin{tabular}{lcccc}
\toprule
$K$ & SLS (full) & Soft labels (full) & SLS (high) & Soft labels (high) \\
\midrule
5  & \textbf{114.0} $\pm$ 4.5 & 241.4 $\pm$ 36.6 & \textbf{478.7} $\pm$ 76.6 & 1203.1 $\pm$ 158.2 \\
10 & \textbf{114.1} $\pm$ 8.0 & 243.6 $\pm$ 47.9 & \textbf{369.1} $\pm$ 52.2 & 849.7 $\pm$ 296.3 \\
25 & \textbf{97.5} $\pm$ 4.0 & 267.1 $\pm$ 43.7 & \textbf{277.8} $\pm$ 35.8 & 876.1 $\pm$ 235.4 \\
50 & \textbf{98.9} $\pm$ 1.2 & 218.4 $\pm$ 20.0 & \textbf{312.4} $\pm$ 45.8 & 626.4 $\pm$ 38.3 \\
\bottomrule
\end{tabular}
}
\end{table}

\paragraph{Resampling frequency.}
Figure~\ref{fig:resample_freq} shows the resampling-frequency probe. Soft NLL worsens monotonically as sampled labels are held fixed longer.

\begin{figure}[!t]
\centering
\includegraphics[width=\columnwidth]{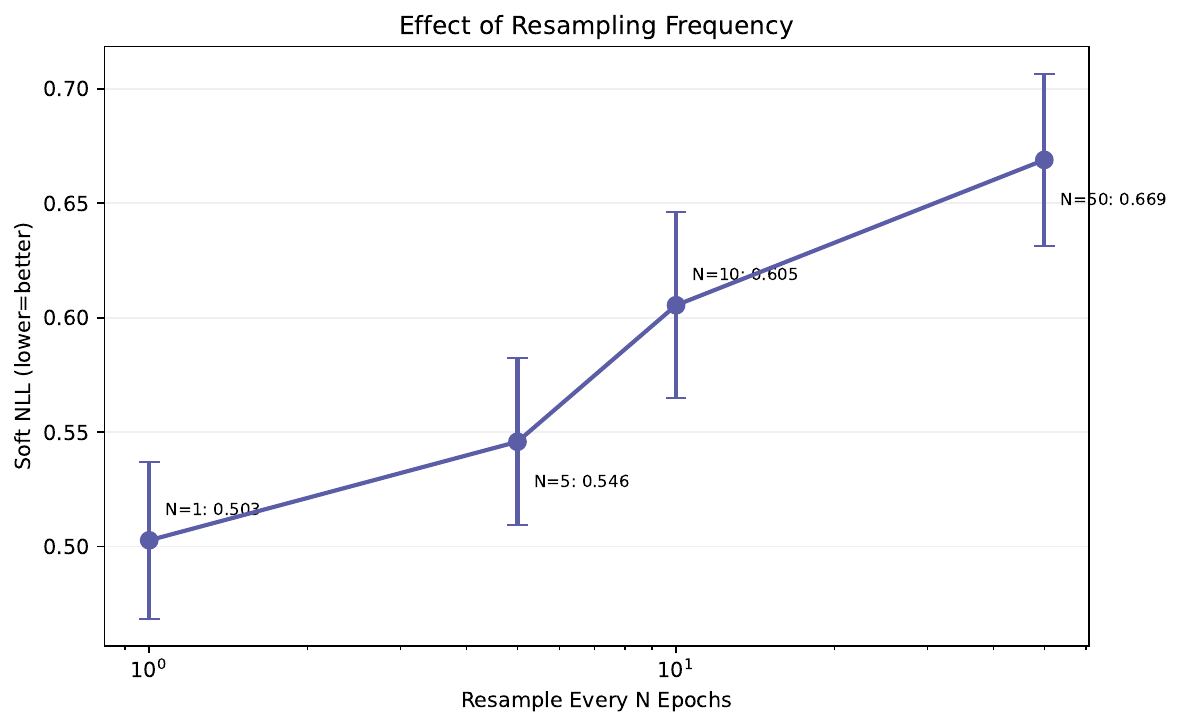}
\caption{Resampling-frequency probe. Mean soft NLL degrades as sampled labels are held fixed longer: every epoch 0.5027, every 5 epochs 0.5458, every 10 epochs 0.6054, and every 50 epochs 0.6689.}
\label{fig:resample_freq}
\end{figure}

\paragraph{Robustness of the sparse-target diagnostic.}
The JS-distance diagnostic in the main text is the cleanest summary, but the same qualitative picture appears under alternative views.

\begin{figure*}[!tp]
\centering
\includegraphics[width=0.92\textwidth]{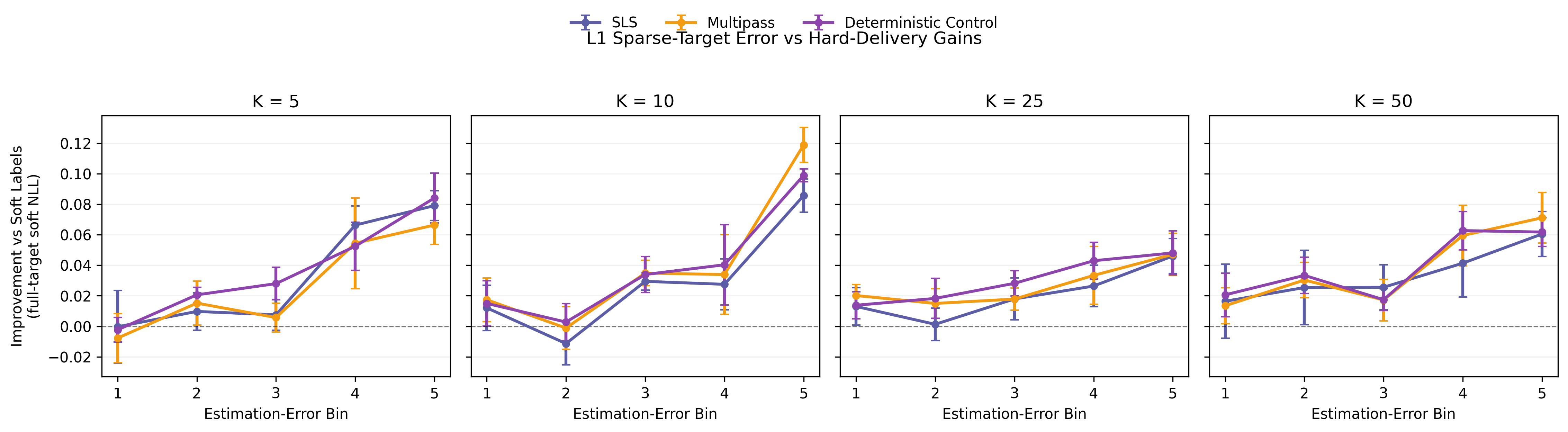}
\caption{Sparse-target robustness check using L1 error instead of JS distance. The qualitative pattern matches the main-text figure: hard-delivery improvements remain near zero in low-error bins and grow in higher-error bins.}
\label{fig:sparse_target_error_l1}
\end{figure*}

\begin{figure*}[!tp]
\centering
\includegraphics[width=0.92\textwidth]{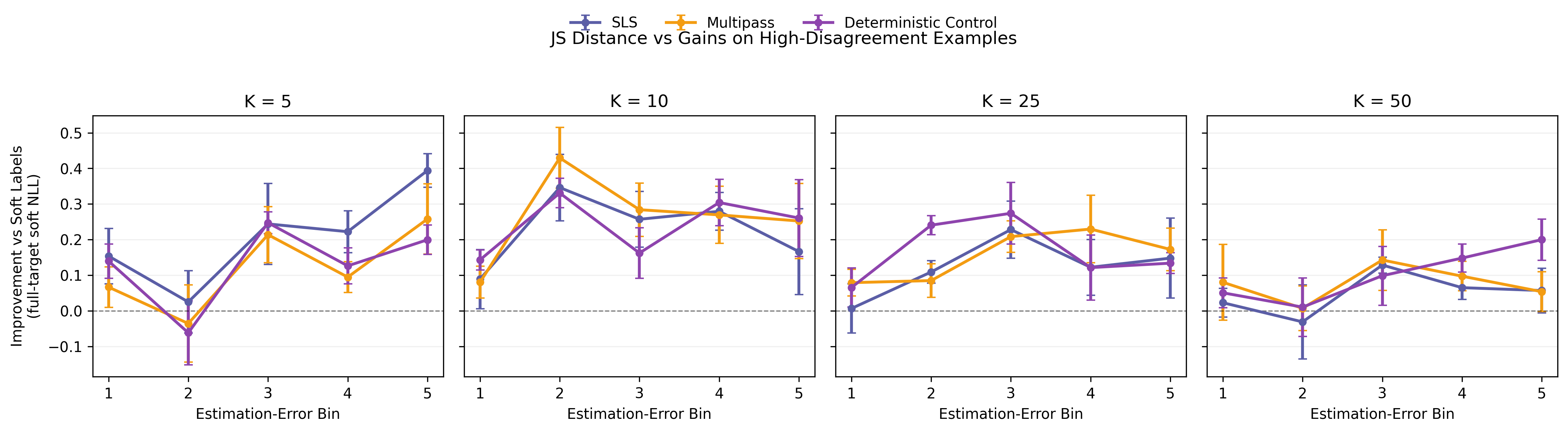}
\caption{Sparse-target robustness check restricted to the high-disagreement slice. The slice is noisier because it is much smaller, but the highest-error bins still show the largest improvements.}
\label{fig:sparse_target_error_high}
\end{figure*}

\subsection{Geometry and Representation Diagnostics}
\label{app:gradvar}

\paragraph{Gradient variance.}
\label{app:grad_variance}
The proposition predicts that SLS injects label-sampling variance proportional to annotator disagreement. Figure~\ref{fig:grad_variance} supports that prediction: the per-seed Spearman correlation between annotator entropy and last-layer gradient variance averages 0.939 and ranges from 0.919 to 0.953.

\begin{figure}[!t]
\centering
\includegraphics[width=\columnwidth]{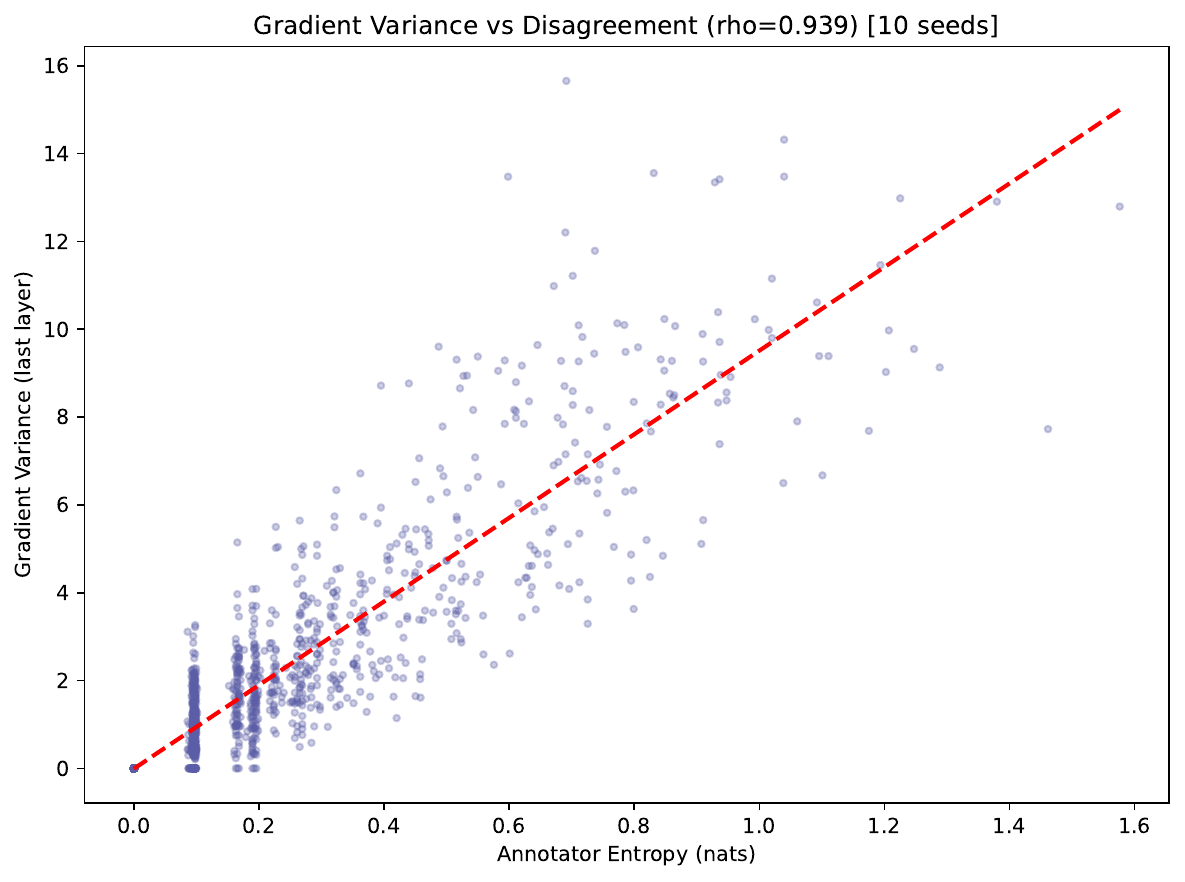}
\caption{Gradient-variance analysis for the main CIFAR-10H setting. Across runs, the Spearman correlation between annotator entropy and last-layer gradient variance averages about 0.939.}
\label{fig:grad_variance}
\end{figure}

\paragraph{Additional supporting diagnostics.}
The following figures provide compact supporting views of the geometry and representation probes reported in the main text.

\begin{figure*}[!tp]
\centering
\includegraphics[width=0.92\textwidth]{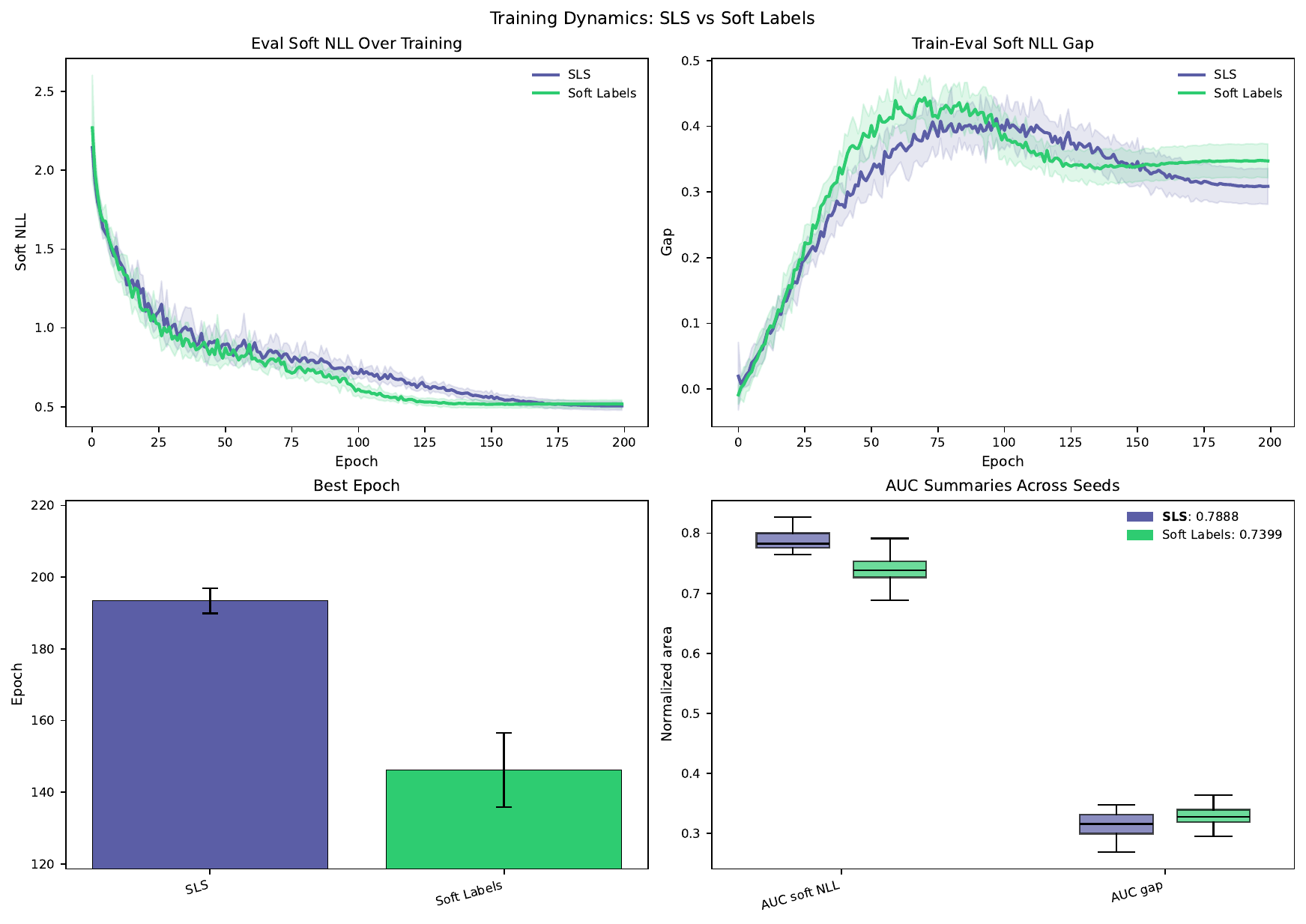}
\caption{Training-dynamics comparison. Averaged across seeds, SLS reaches 95\% of its best eval soft NLL about 35.7 epochs earlier than soft labels and reaches its best epoch about 47.2 epochs earlier, while the final endpoints remain close.}
\label{fig:training_dynamics_appendix}
\end{figure*}

\begin{figure*}[!tp]
\centering
\includegraphics[width=0.92\textwidth]{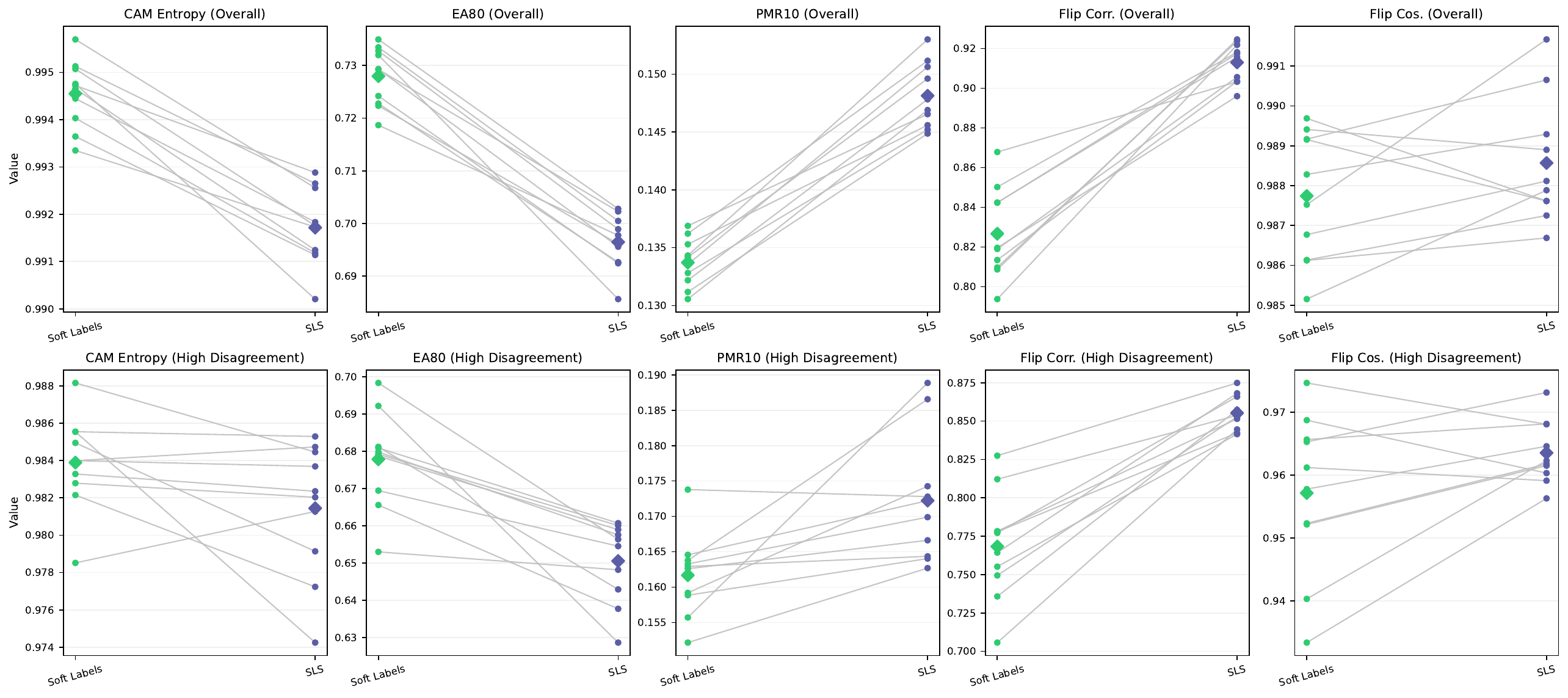}
\caption{Quantitative Grad-CAM stability. The strongest signal is cross-seed stability correlation: 0.901 vs.\ 0.804 overall in favor of SLS, and 0.861 vs.\ 0.756 on the high-entropy slice.}
\label{fig:gradcam_quant_appendix}
\end{figure*}

\begin{figure*}[!tp]
\centering
\includegraphics[width=0.9\textwidth]{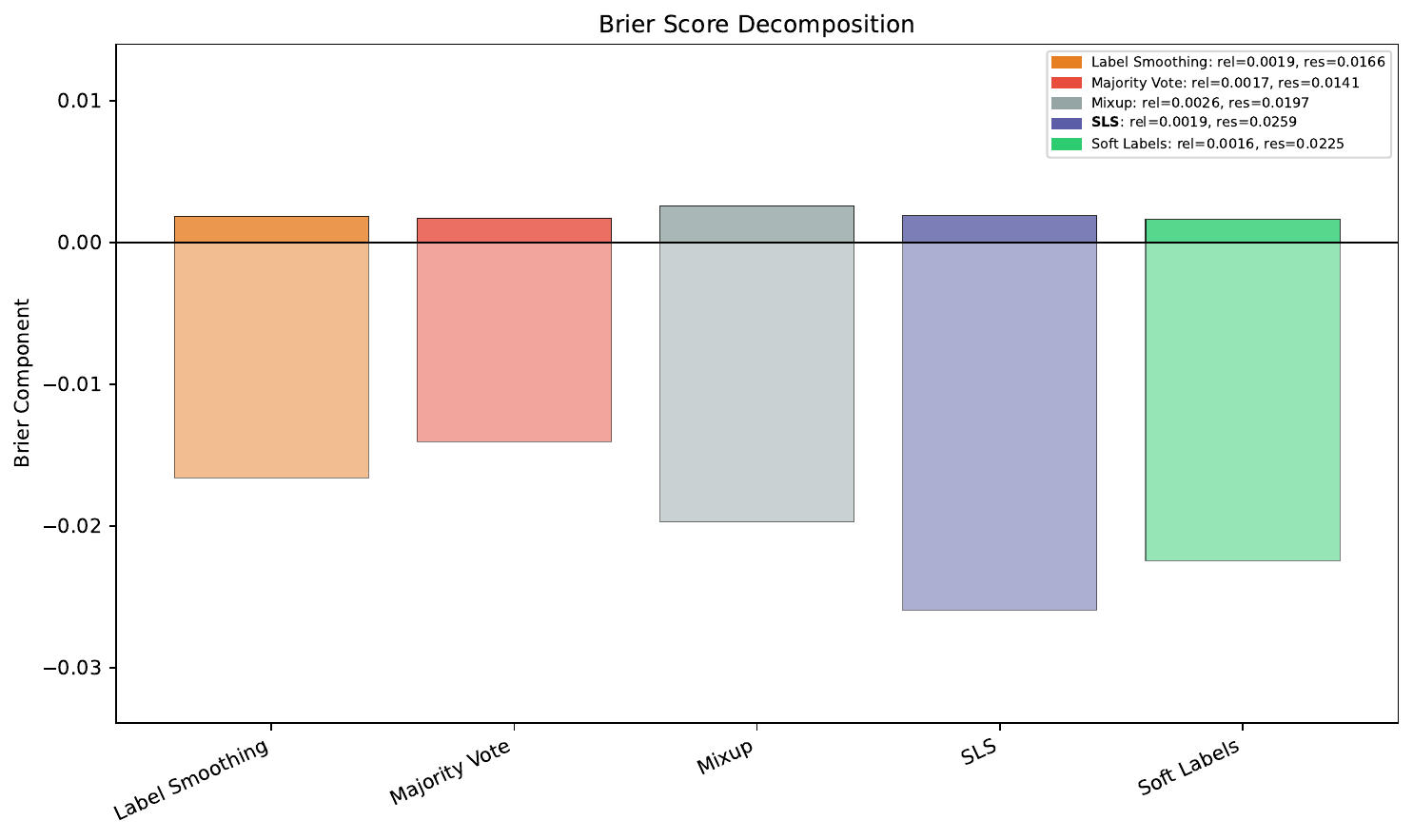}
\caption{Brier decomposition. SLS and soft labels remain close on reliability while SLS shows slightly higher resolution.}
\label{fig:brier_decomp}
\end{figure*}

\begin{figure*}[!tp]
\centering
\includegraphics[width=0.9\textwidth]{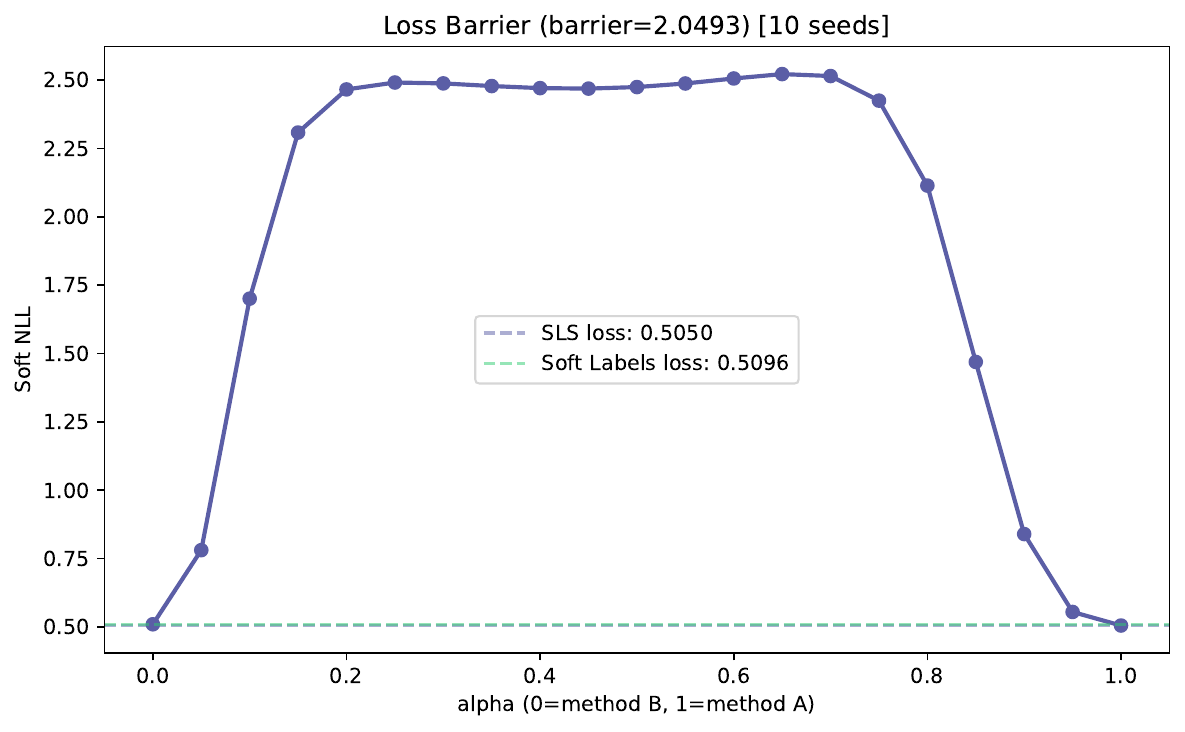}
\caption{Loss-barrier diagnostic between SLS and soft-label solutions. Across the 10 seed-paired interpolations, the mean barrier is 2.05, consistent with the two methods occupying different basins despite similar endpoint metrics.}
\label{fig:loss_barrier_appendix}
\end{figure*}

\begin{figure*}[!tp]
\centering
\includegraphics[width=0.8\textwidth]{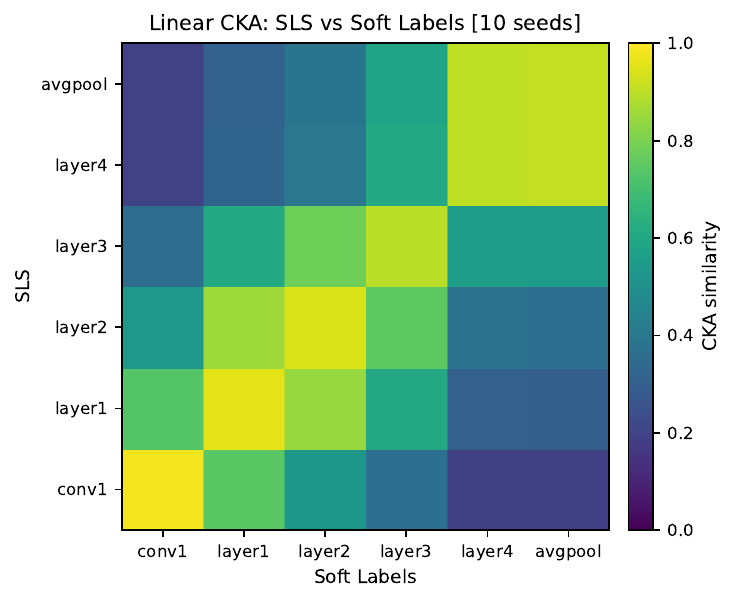}
\caption{Linear CKA representation similarity. Within-method penultimate-layer CKA is higher for SLS than for soft labels (0.920 vs.\ 0.887), indicating more reproducible representations under hard delivery.}
\label{fig:cka_appendix}
\end{figure*}

\subsection{OOD Results}
\label{app:ood}

\paragraph{SVHN family comparison.}
Table~\ref{tab:ood_family_svhn} reports the family-level SVHN far-OOD AUROC comparison. The table is best read as descriptive support rather than as a paired hypothesis test because the Soft labels column comes from the 10-seed main OOD outputs, whereas Multipass, Deterministic control, and SLS come from the 5-seed hard-delivery comparison.

\begin{table*}[!t]
\centering
\small
\caption{Family-level SVHN far-OOD AUROC. Higher is better. The Soft labels column comes from the 10-seed main OOD outputs, whereas Multipass, Deterministic control, and SLS come from the 5-seed hard-delivery comparison, so the comparison is descriptive rather than a paired hypothesis test. Hard-label delivery exceeds Soft labels on five of six detectors, and Deterministic control is highest on five of six.}
\label{tab:ood_family_svhn}
\resizebox{\textwidth}{!}{%
\begin{tabular}{lcccc}
\toprule
Score & Soft labels & Multipass & Deterministic control & SLS \\
\midrule
Energy         & 0.8634 $\pm$ 0.0429 & 0.8858 $\pm$ 0.0255 & \textbf{0.8890 $\pm$ 0.0253} & 0.8788 $\pm$ 0.0261 \\
MSP            & 0.8364 $\pm$ 0.0432 & 0.8553 $\pm$ 0.0182 & \textbf{0.8643 $\pm$ 0.0193} & 0.8520 $\pm$ 0.0183 \\
ODIN           & 0.8650 $\pm$ 0.0429 & 0.8871 $\pm$ 0.0242 & \textbf{0.8921 $\pm$ 0.0233} & 0.8812 $\pm$ 0.0254 \\
Pred.\ entropy & 0.8527 $\pm$ 0.0445 & 0.8714 $\pm$ 0.0212 & \textbf{0.8793 $\pm$ 0.0233} & 0.8660 $\pm$ 0.0208 \\
Logit margin   & 0.8112 $\pm$ 0.0436 & 0.8323 $\pm$ 0.0126 & \textbf{0.8419 $\pm$ 0.0136} & 0.8326 $\pm$ 0.0177 \\
KNN            & \textbf{0.7974 $\pm$ 0.0284} & 0.7836 $\pm$ 0.0250 & 0.7788 $\pm$ 0.0354 & 0.7741 $\pm$ 0.0295 \\
\bottomrule
\end{tabular}%
}
\end{table*}

\paragraph{Near-OOD results on CIFAR-100.}
Table~\ref{tab:ood_cifar100} reports the CIFAR-100 near-OOD AUROC comparison between SLS and soft labels across representative score types.

\begin{table}[!t]
\centering
\small
\caption{CIFAR-100 near-OOD AUROC for SLS versus soft labels. Higher is better.}
\label{tab:ood_cifar100}
\begin{tabular}{lccc}
\toprule
Score & SLS & Soft labels & $p$ \\
\midrule
Energy        & \textbf{0.8185} & 0.8096 & 0.0186 \\
ODIN          & \textbf{0.8189} & 0.8107 & 0.0186 \\
Pred.\ entropy & \textbf{0.8132} & 0.8082 & 0.0801 \\
MSP           & \textbf{0.8029} & 0.7972 & 0.0654 \\
KNN           & \textbf{0.8031} & 0.8009 & 0.3125 \\
Logit margin  & \textbf{0.7833} & 0.7779 & 0.0801 \\
\bottomrule
\end{tabular}
\end{table}

Table~\ref{tab:ood_family_cifar100} adds the family-level CIFAR-100 comparison across multipass, deterministic control, and SLS. As in the SVHN family comparison, the soft-label column comes from the 10-seed main OOD outputs whereas the hard-label family entries come from the 5-seed hard-delivery comparison.

\begin{table*}[!t]
\centering
\small
\caption{Family-level CIFAR-100 near-OOD AUROC across representative score types. Higher is better. The Soft labels column comes from the 10-seed main OOD outputs, whereas Multipass, Deterministic control, and SLS come from the 5-seed hard-delivery comparison.}
\label{tab:ood_family_cifar100}
\begin{tabular}{lcccc}
\toprule
Score & Soft labels & Multipass & Deterministic control & SLS \\
\midrule
Energy & 0.8096 & \textbf{0.8185} & 0.8183 & 0.8163 \\
MSP    & 0.7972 & \textbf{0.8032} & 0.8006 & 0.8003 \\
KNN    & 0.8009 & \textbf{0.8053} & 0.8050 & 0.8019 \\
\bottomrule
\end{tabular}
\end{table*}

\end{document}